\documentclass{article}

\usepackage{microtype}
\usepackage{graphicx}
\usepackage{subcaption}
\usepackage{booktabs} 

\usepackage{hyperref}

\usepackage[preprint]{icml2026}

\usepackage{amsmath}
\usepackage{amssymb}
\usepackage{mathtools}
\usepackage{amsthm}
\usepackage{multirow}
\usepackage[table,xcdraw]{xcolor}

\usepackage[capitalize,noabbrev]{cleveref}

\theoremstyle{plain}

\newtheorem{lemma}{Lemma}

\theoremstyle{definition}

\newtheorem{assumption}{Assumption}
\theoremstyle{remark}

\usepackage[textsize=tiny]{todonotes}

\icmltitlerunning{Difficulty-Differentiated Policy Optimization
with Length Redistribution for Efficient and Robust Reinforcement Learning}

\begin{document}

\twocolumn[
  \icmltitle{Balancing the Reasoning Load:
  Difficulty-Differentiated Policy Optimization \\ with Length 
  Redistribution for Efficient and Robust Reinforcement Learning}



  \icmlsetsymbol{equal}{*}

  \begin{icmlauthorlist}
    \icmlauthor{Yinan Xia}{xxx}
    \icmlauthor{Haotian Zhang}{yyy}
    \icmlauthor{Huiming Wang}{yyy}
  \end{icmlauthorlist}

  \icmlaffiliation{yyy}{Kuaishou Technology }
  \icmlaffiliation{xxx}{Tianjin University}

  \icmlcorrespondingauthor{Huiming Wang}{huiming\_wang@mymail.sutd.edu.sg}

  \icmlkeywords{Machine Learning, ICML}

  \vskip 0.3in
]



\printAffiliationsAndNotice{}  

\begin{abstract}
  Large Reasoning Models (LRMs) have shown exceptional reasoning capabilities, but they also suffer from the issue of \textit{overthinking}, often generating excessively long and redundant answers.
  For problems that exceed the model's capabilities, LRMs tend to exhibit the \textit{overconfidence} phenomenon, generating overly short but incorrect answers, which may contribute to suboptimal performance. 
  To address these issues, we propose Difficulty-Differentiated Policy Optimization (DDPO), an efficient reinforcement learning algorithm that optimizes simple and complex tasks separately based on the \textit{overconfidence} phenomenon.
  Specifically, it reduces the output length for simple tasks without compromising accuracy, while for complex tasks, it expands the exploration space to improve performance. We further derive the theoretical conditions for maximizing expected accuracy, which require the length distribution to closely approximate the optimal length and be as concentrated as possible. Based on these conditions, we propose using the difficulty-level average as a well-founded reference for length optimization.
  Extensive experiments on both in-domain and out-of-domain benchmarks validate the superiority and effectiveness of DDPO. Compared to GRPO, DDPO reduces the average answer length by 12\% while improving accuracy by 1.85\% across multiple benchmarks, achieving a better trade-off between accuracy and length. The code is available at ~\url{https://github.com/Yinan-Xia/DDPO}.
\end{abstract}

\section{Introduction}
In recent years, many LRMs~\cite{guo2025deepseek,yang2025qwen3} have demonstrated remarkable capabilities, largely attributed to the progressively enhanced reasoning abilities. To address complex tasks, such as mathematical reasoning~\cite{shao2024deepseekmath}and code programming~\cite{jimenez2024swebench}, LRMs engage in Chain-of-Thought (CoT)~\cite{wei2022chain} to reflect on and revise their reasoning process before arriving at a final answer. This approach significantly enhances the comprehension and problem-solving skills of models. However, an excessive increase in reasoning may lead to the issue of \textit{overthinking}~\cite{sui2025stop,chen2024not,cuadron2025danger}, resulting in unnecessary computational resource consumption and, to some extent, a degradation in performance~\cite{cuadron2025danger}.

\begin{figure}[t]
  \begin{center}
\centerline{\includegraphics[width=0.9\columnwidth]{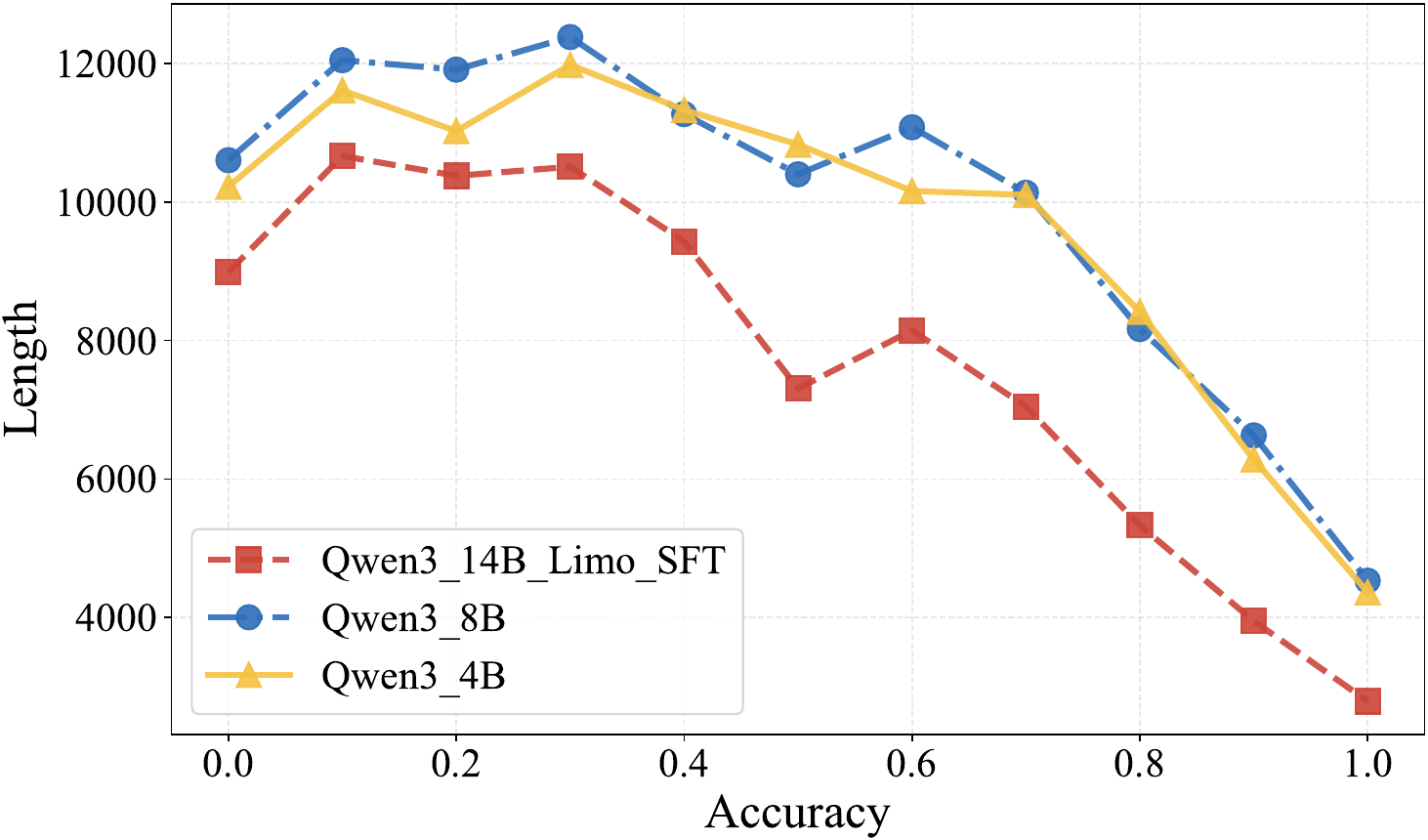}}
    \caption{
      The relationship between length and difficulties for different LRMs. The output length exhibits a non-monotonic relationship with problem difficulty: it first increases and then decreases as the difficulty decreases. For overly difficult problems, the LRMs generate shorter but incorrect answers, which we define as the 
      \textit{overconfidence} phenomenon.
    }
    \label{fig:line}
    \vspace{-2em}
  \end{center}
\end{figure}
\begin{figure*}[th]
\begin{center}
\centerline{\includegraphics[width=\textwidth]{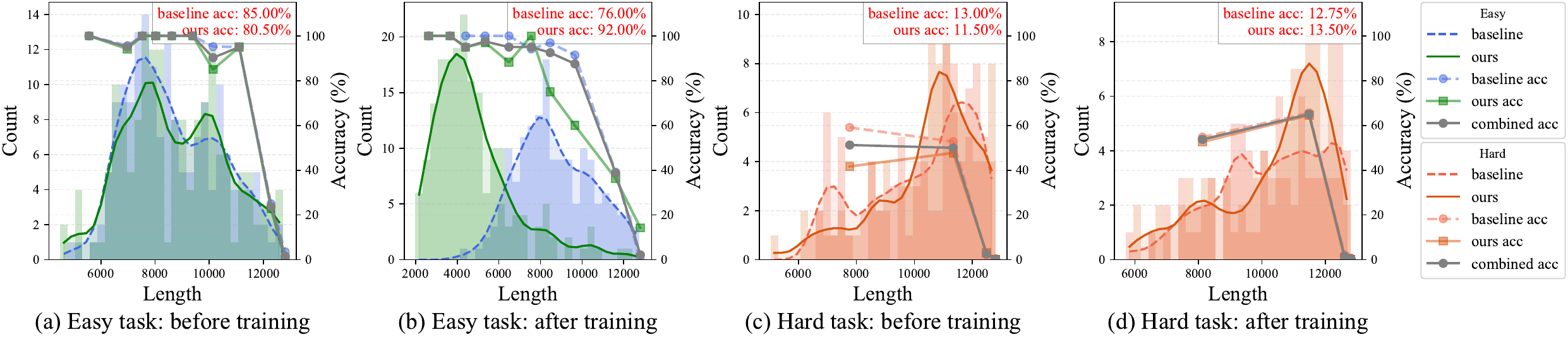}}
\caption{(a) Output length distribution and accuracy as a function of length on easy tasks before RL training.
(b) Length distribution and length-wise accuracy after GRPO ({\color[HTML]{4169e1} blue}) and DDPO ({\color[HTML]{377E23} green}) training. After training, DDPO yields a left-shifted and more concentrated length distribution compared to GRPO, with samples clustered around the optimal length (corresponding to the highest accuracy), resulting in higher accuracy.
(c) Output length distribution and accuracy as a function of length on hard tasks before RL training.
(d) Length distribution and length-wise accuracy after GRPO ({\color[HTML]{E07052} red}) and DDPO ({\color[HTML]{C75E2A}brick red}) training. After training, DDPO exhibits a clear rightward shift relative to GRPO, with a more concentrated distribution centered around the optimal length, leading to improved accuracy.} \label{fig:dist}
\vspace{-2em}
\end{center}
\end{figure*}
To mitigate this \textit{overthinking} phenomenon, several approaches~\cite{team2025kimi,luo2025o1,yu2025dapo} have been proposed to reduce the reasoning length, including controllable-length training to achieve a trade-off between resources and performance~\cite{aggarwal2025l1,lou2025adacot}, or implementing length penalties during the Reinforcement Learning (RL) process to enforce shorter outputs~\cite{yi2025shorterbetter,zhang2025grpo}. These approaches primarily focus on resource consumption through length reduction in an intuitive manner, without considering an evidence-based optimization approach based on the relationship between length and performance, thus failing to address the core challenge resulting from \textit{overthinking}: degraded performance.

To explore the relationship between reasoning length and performance, we present a line chart showing the reasoning length of the problems with varying accuracy levels across different model sizes in~\cref{fig:line}. It reveals that the relationship between length and accuracy is not positively correlated; rather, it exhibits an overall ``inverted U-shape.'' Interestingly, for extremely difficult problems, the reasoning length tends to be shorter; we define this phenomenon as \textit{overconfidence}, which reflects the model's overestimation of its own capabilities, leading to short but incorrect answers. This phenomenon restricts the model's ability to explore challenging problems, which we believe is a key factor contributing to its suboptimal accuracy. On the other hand, for simpler problems, the reasoning length tends to decrease as the accuracy increases. Many dynamic sampling approaches~\cite{yu2025dapo,bae2025online,qu2025can} assume that extremely difficult and overly simple problems contribute little to improving the model's efficiency and thus filter out these data points. However, these methods fail to fully leverage the potential of complex samples to enhance model performance, nor do they contribute to reducing the reasoning length for simpler problems.
In contrast, we propose categorizing the samples into difficult and simple problems and leveraging them to their fullest potential to enhance the model's capabilities across different aspects.

Additionally, we present the accuracy of the same problem at different reasoning lengths in the line chart of~\cref{fig:dist}, and define the length corresponding to the highest accuracy as the optimal length. Together with the reasoning length distribution in the histogram of~\cref{fig:dist}, we derive the following theoretical conclusions, with the formal proofs provided in~\cref{sec:optimal_length}: (1) The expected accuracy can be maximized as the expected length approaches the optimal length. (2) When the expected length equals the optimal length, a more concentrated length distribution results in a higher expected accuracy. Based on this, we aim to align the length as closely as possible with the optimal length. We approximate the average length of correct answers at the same difficulty level as the optimal length, and aim to make the model's expected output closely align with this optimal length, while ensuring a more concentrated distribution. As shown in~\cref{fig:dist}, compared to GRPO~\cite{shao2024deepseekmath}, our method produces a length distribution that is closer to the optimal length and more concentrated for both easy and hard problems, leading to higher accuracy. For easy problems in~\cref{fig:dist}(b), our method substantially reduces reasoning length without sacrificing performance, whereas for hard problems in~\cref{fig:dist}(d), it increases reasoning length to expand the exploration space and improve accuracy.
In summary, our contributions are as follows:

\begin{itemize}
    \item We propose Difficulty-Differentiated Policy Optimization (DDPO), a reinforcement learning algorithm that separately optimizes easy and hard problems to improve model efficiency and exploration capacity.
    \item We derive the key factors for leveraging the length distribution to maximize accuracy and develop a theoretically well-founded length optimization method to improve performance.
    \item Extensive experiments across in-domain and out-of-domain benchmarks on multiple baselines demonstrate the effectiveness and superiority of our approach.
\end{itemize}

\section{Related Work}
\subsection{Length Control Works in LLMs}
Recent works~\cite{chen2024not, arora2025training, su2025thinking} have addressed the overthinking phenomenon~\cite{zeng2025revisiting, muennighoff2025s1, luo2025deepscaler} in LRMs, which contributes to computational inefficiency and impaired performance~\cite{yi2025shorterbetter}. Studies~\cite{su2025between, wu2025more} show that increasing Chain-of-Thought (CoT) length does not continuously improve accuracy. Length-controlling strategies~\cite{team2025kimi, luo2025o1} fall into two categories: training-based and training-free. Training-based methods either use specialized data or apply length penalties or rewards in reinforcement learning (RL) frameworks.
For example, 
L1~\cite{aggarwal2025l1} optimizes for accuracy while adhering to user-specified length constraints, whereas ShorterBetter~\cite{yi2025shorterbetter} defines the optimal length as the shortest correct answer and penalizes other responses based on this ideal. AdaCtrl~\cite{huang2025adactrl} adjusts reasoning length based on a self-assessed evaluation of problem difficulty, focusing mainly on simpler questions. In contrast to the linear penalty approach used in these methods, AdaCoT~\cite{lou2025adacot} employs a binary penalty to decide whether to invoke Chain-of-Thought (CoT) reasoning. Training-free methods, on the other hand, control output length by prompting the model to generate concise contexts or by sampling and pruning responses to select those that are both succinct and informative~\cite{muennighoff2025s1,chen2024unlocking,liu2025adaptivestep,hou2025thinkprune}. Despite these efforts to consciously control output length, most research has primarily focused on reducing length, with limited attention given to the potential benefits of increasing it. 
\subsection{Difficulty-Adaptive Works in LLMs}
The difficulty of the data plays a crucial role in training LLMs, as appropriately challenging training data leads to more efficient model development and better generalization.
A notable example of training with variable difficulty data is curriculum learning~\cite{narvekar2018learning,narvekar2020curriculum}, where the training data is organized from simple to complex.
Shorter but not Worse~\cite{bounhar2025shorter} initially capitalized on the concise nature of easier problems as an implicit length regularizer and later incorporated more challenging issues to enhance the model's capabilities. QwenLong L1~\cite{wan2025qwenlong} employed a difficulty-aware retrospective sampling strategy to incentivize policy exploration, showcasing its effectiveness in long-context reasoning tasks. Moreover, difficulty-adaptive algorithms are specifically designed to handle data with varying difficulties, resulting in enhanced model performance. For example, TP-GRPO~\cite{he2025good} adjusts the constraint strength of the Process Reward Model (PRM) based on problem difficulty, applying more lenient intermediate constraints for complex problems to encourage greater exploration. 
GRPO-LEAD~\cite{zhang2025grpo} introduces a difficulty-aware advantage mechanism: for difficult problems, correct responses receive larger updates, while incorrect responses to easier problems are penalized more.
Additionally, other studies~\cite{shen2025dast, zhang2025continue} incorporate difficulty-based length penalties in the reward function, applying stronger penalties to simpler problems and lighter penalties to more complex ones.
The practical applications of the aforementioned works highlight the feasibility of optimizing models by differentiating training data based on difficulty. 
However, existing methods typically divide problem difficulty based on heuristic or empirical criteria. In contrast, our approach defines problem difficulty according to the observed \textit{overconfidence} phenomenon.

\begin{figure*}[th]
\begin{center}
\centerline{\includegraphics[width=0.9\textwidth]{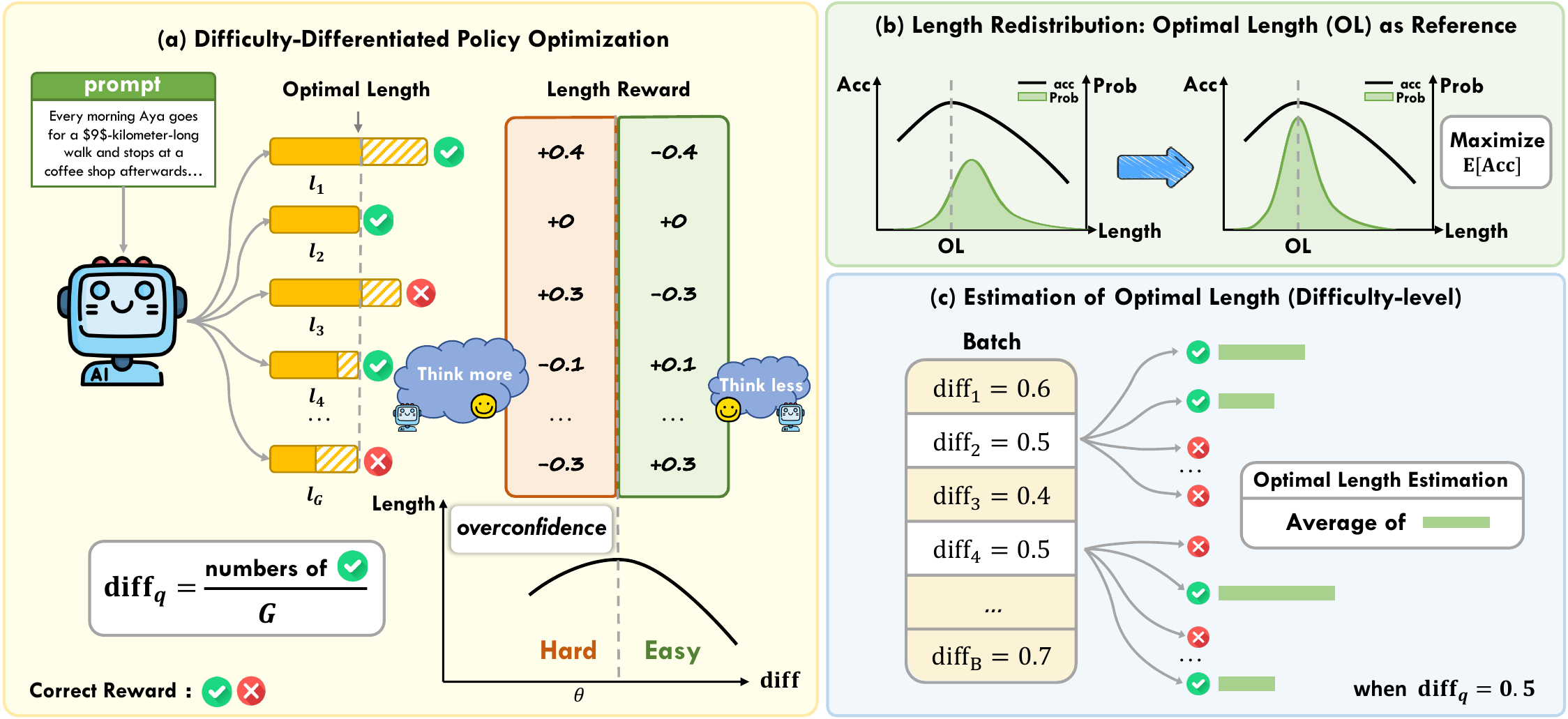}}
\caption{(a) We define the rollout accuracy as an indicator of the difficulty for each query. By analyzing the relationship between difficulty and output length (shown in the lower-right panel), we observe a inverted U-shaped pattern, which we define as ``\textit{overconfidence}''. Based on this observation, we categorize queries into hard and easy ones and propose Difficulty-Differentiated Policy Optimization (DDPO), which encourages greater exploration for difficult queries while shrinking output length for easy queries to avoid redundancy. (b) When applying the length optimization strategy, we use the optimal length as a reference and guide the final length distribution to be closer to this value and more concentrated, thereby increasing the expected accuracy. (c) We estimate the optimal length as the average length of correct answers for queries of the same difficulty within the batch.
} \label{fig:method}
\vspace{-2em}
\end{center}
\end{figure*}

\section{Method}
In this work, we aim to explore the relationship between reasoning length and model performance, with the goal of effectively optimizing the model's output length to simultaneously enhance its performance.
In~\cref{sec:grpo}, we provide a brief introduction of our baseline algorithm, GRPO.
In~\cref{sec:DDO}, we identify the phenomenon of \textit{overconfidence} in LRMs and propose a method for difficulty-differentiated optimization. In~\cref{sec:optimal_length}, we establish the conditions for the length distribution that maximizes the expected accuracy and further refine our approach for length optimization.
\subsection{Group Relative Policy Optimization} \label{sec:grpo}
Group Relative Policy Optimization (GRPO)~\cite{shao2024deepseekmath} replaces the additional value function approximation in Proximal Policy Optimization (PPO)~\cite{schulman2017proximal} with the average reward computed from multiple sampled outputs of the same question. Specifically, for each question $q$, GRPO samples a group of outputs $\{o_1,o_2,...,o_G\}$ from the old policy $\pi_{\theta_{old}}$ to estimate the expected reward, and then optimizes the policy model $\pi_{\theta}$ by maximizing the following objective:
\begin{equation}
\begin{aligned}
&\mathcal{J}_{GRPO}(\theta) =\mathbb{E}[q\sim P(Q),\{o_i\}_{i=1}^G\sim\pi_{\theta_{old}}(O|q)] \\
 & \frac{1}{G}\sum_{i=1}^G\left(\min\left(w_{i}A_i,\operatorname{clip}\left(w_i,1-\varepsilon,1+\varepsilon\right)A_i\right)-\beta\text{KL}\right), 
\end{aligned}
\end{equation}
\begin{align}
    w_i     &= \frac{\pi_\theta(o_i|q)}{\pi_{\theta_{old}}(o_i|q)}, \label{eq:wi} \\
    \text{KL} &= w_i - \log w_i - 1, \label{eq:kl}
\end{align}
where $\varepsilon$ and $\beta$ are hyper-parameters, and $A_i$ is the advantage computed from the rewards $\{r_1,r_2,...,r_G\}$ corresponding to the output group:
\begin{align}
A_i=\frac{r_i-\text{mean}(\{r_1,r_2,\cdots,r_G\})}{\mathrm{std}(\{r_1,r_2,\cdots,r_G\})},
\end{align}
Due to the group algorithm in GRPO, we can calculate the accuracy of each query during the current training process, which we use as a measure of the query's difficulty:
\begin{align} \label{eq:diff}
    \text{diff}_q=\frac{\sum_{i=1}^Gr_i}{G},
\end{align}
where $r_i$ is a binary value (0 or 1) used to measure whether the output $o_i$ is correct. 

\subsection{Difficulty-Differentiated Policy Optimization}\label{sec:DDO}
\textbf{The Overconfidence Phenomenon in LRMs.} 
To explore the relationship between reasoning length and model performance, we visualize the output lengths for problems of varying difficulties in LRMs in~\cref{fig:line}. Specifically, we use different LRMs to generate 10 rollouts for each query in validation set. By calculating the difficulty of each query using~\cref{eq:diff}, we 
divide the queries into 6 intervals based on difficulty, ranging from 0 to 1 in steps of 0.2. Then we plot the line chart of output lengths for each difficulty level in~\cref{fig:line}. 
The line chart shows an initial increase in length, followed by a decrease as the difficulty decreases, which contradicts the intuition that answer length should increase with difficulty. We attribute this counterintuitive phenomenon—where samples with an accuracy of 0 exhibit significantly shorter lengths than those with an accuracy of 0.2—to the \textit{overconfidence} of LRMs on overly difficult questions beyond the model's capability, resulting in shorter but incorrect answers, which likely contributes to suboptimal performance.
For other problems, where length decreases as accuracy increases, we believe these correspond to relatively simple tasks within the model's capabilities. In such cases, there is still potential to further reduce the output length.

\textbf{Difficulty-Differentiated Policy Optimization.} Based on the \textit{overconfidence} phenomenon, we categorize problems into difficult and easy subsets by using a threshold $\theta$ for differentiation. Problems with accuracy lower than $\theta$ ($\theta\in[0,1]$) are classified as difficult, while those with accuracy greater than or equal to $\theta$ are classified as easy. As shown in~\cref{fig:method}, for these two categories, we implemented distinct optimization strategies as follows.

For difficult problems, we aim to increase the output length to expand the model's exploration space, thereby improving accuracy. To prevent excessive exploration that may lead to truncated answers, we set an upper bound $L_\text{high}$ and avoid encouraging answers beyond this limit. Formally, for a question $q$ with difficulty $\text{diff}_q<\theta$, its reward $r_i$ is defined as:
\begin{align} \label{eq:hard}
r_i = 
\begin{cases} 
r_i^\text{correct} + \alpha_{\text{hard}} \cdot z_i, & \text{if } L_i \leq L_\text{high} \\
r_i^{correct}, & \text{if } L_i > L_\text{high}
\end{cases}
\end{align}
where $r_i^\text{correct}$ is the correct reward of the output $o_i$, and $z_i=L_i/L_\text{max}$, with $L_\text{max}$ being the maximum length used for normalization. Notably, the coefficient is defined as $\alpha_\text{hard}=\theta-\text{diff}_q$, indicating that the degree of exploration increases with problem difficulty.

For a simple problem $q$ with $\text{diff}_q\geq\theta$, we tend to shorten the answer length without compromising accuracy:
\begin{align} \label{eq:easy}
r_i = 
\begin{cases} 
r_i^\text{correct} - \alpha_{\text{easy}} \cdot z_i, & \text{if } L_i \geq L_\text{low} \\
r_i^\text{correct}, & \text{if } L_i < L_\text{low}
\end{cases}
\end{align}
where $L_\text{low}$ is the lower bound to prevent over-penalty. And $\alpha_\text{easy}=\text{diff}_q$, implying that the simpler the problem, the stronger the length constraint applied.

\subsection{Towards Optimal Performance}\label{sec:optimal_length}
In addition to difficulty-differentiated length optimization, the direction and extent of optimization are also crucial. This insight motivates us to refine the length optimization algorithm by focusing on how length influences model performance, enabling more precise and rational control of length for optimal outcomes.

\textbf{Existence of the Optimal Length.} 
We present the relationship between length and accuracy for both simple and difficult tasks in~\cref{fig:dist} (the line chart). 
Specifically, we sampled 200 answers per question using fine-tuned Qwen3-14B, dividing the entire length range into $n$ bins to ensure an even distribution of samples, and then calculated the accuracy for each bin. As shown in the line chart in~\cref{fig:dist}, the relationship between answer length and accuracy follows an approximately convex curve, where performance initially improves and then deteriorates as length increases, peaking at a certain point. Based on this observation, we propose the following hypothesis:
\begin{assumption}\label{assumption:optimal}
\textit{For a fixed model, the length of answers is directly related to their accuracy, with each question having an optimal answer length that maximizes accuracy.}
\end{assumption}
\textbf{Maximizing Expected Accuracy Theoretically.}
We define the reasoning length $l$ with a probability density function $p(l)$, and let $\mu$ and $\sigma^2$ denote its mean and variance. Based on~\cref{assumption:optimal}, we define $f(l)$ as the accuracy corresponding to reasoning length $l$, assuming that $f(l)$ is a concave function with a unique maximum at $l^*$, i.e., $f(l^*) = \max\left(f(l)\right)$.

\begin{lemma} \label{lemma:mu}
The expected accuracy $E(f(l))$ can be maximized $f(l^*)$ when $\mu = l^*$.
\end{lemma}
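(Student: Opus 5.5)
The plan is to use Jensen's inequality for the concave function $f$ and then identify when its bound is attained. The statement is slightly informal, so I will read it as follows. For a concave $f$ with unique maximizer $l^*$, $E(f(l))$ is bounded above by $f(\mu)$, and hence by $f(l^*)$. Among length distributions, the bound $f(\mu)$ is largest exactly when $\mu = l^*$. The value $f(l^*)$ itself is attained, in the limit, by distributions concentrated at $l^*$. The proof should make this reading explicit, since for a strictly concave $f$ and a nondegenerate $p(l)$ the inequality is strict.

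\textbf{Step 1 (Jensen).} Since $f$ is concave and $p(l)$ has finite mean $\mu$, Jensen's inequality gives
\begin{equation*}
E(f(l)) = \int f(l)\,p(l)\,dl \le f\Big(\int l\,p(l)\,dl\Big) = f(\mu).
\end{equation*}
This holds for every admissible density with mean $\mu$. So the function $g(\mu) := \sup\{E(f(l)) : E[l]=\mu\}$ satisfies $g(\mu) \le f(\mu)$.

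\textbf{Step 2 (optimizing over $\mu$).} Because $l^*$ is the unique maximizer of $f$, we have $f(\mu) \le f(l^*)$, with equality iff $\mu = l^*$. Chaining with Step 1 gives $E(f(l)) \le f(\mu) \le f(l^*)$. The second inequality is tight only at $\mu = l^*$. Thus whenever $\mu \neq l^*$, the expected accuracy is capped strictly below $f(l^*)$, by at least $f(l^*) - f(\mu) > 0$. Choosing $\mu = l^*$ is therefore necessary to reach the maximal value.

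\textbf{Step 3 (attainment).} It remains to show the value $f(l^*)$ is actually reached at $\mu = l^*$. The degenerate distribution $p = \delta_{l^*}$ gives $E(f(l)) = f(l^*)$ exactly. If only genuine densities are allowed, take a family $p_\sigma$ with mean $l^*$ and variance $\sigma^2 \to 0$, for instance a uniform distribution on $[l^*-\sigma, l^*+\sigma]$. Since concave functions are continuous on the interior of their domain, $f$ is continuous near $l^*$, so $E_{p_\sigma}(f(l)) \to f(l^*)$. Hence $f(l^*)$ is the supremum, achieved or approached exactly by distributions with $\mu = l^*$. This also sets up the paper's second claim: at fixed mean $l^*$, smaller variance yields higher accuracy. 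A second-order expansion $E f(l) \approx f(l^*) + \tfrac12 f''(l^*)\sigma^2$ would show this, but the present lemma does not need it.

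The main obstacle is conceptual rather than computational. The proof has to state precisely what "can be maximized when $\mu = l^*$" means, so that it neither asserts something false for nondegenerate $p$ nor becomes vacuous. The chain of inequalities in Steps 1–2 together with the attainment argument in Step 3 handles both directions. A minor technical point is that $l$ lives on a bounded interval $[0, L_\text{max}]$. I would assume $l^*$ is interior, so the shrinking-support construction stays inside the domain and continuity of $f$ at $l^*$ holds.
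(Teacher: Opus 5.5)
Your proposal is correct and takes essentially the same route as the paper. Jensen's inequality gives $E[f(l)] \le f(\mu)$, and because $l^*$ is the unique maximizer, $f(\mu) < f(l^*)$ whenever $\mu \neq l^*$; your Step 3 (the point mass at $l^*$, or a shrinking uniform family with mean $l^*$) makes precise the attainment the paper only asserts, which matters because any nondegenerate $p(l)$ actually gives $E[f(l)] < f(l^*)$.
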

\textit{Proof.} For a concave function $f$, the Jensen inequality~\cite{mcshane1937jensen} holds:
\[
f(E[l]) \geq E[f(l)],
\]
with equality if and only if $l$ is deterministic, i.e., $l = \mu$. By the definition of concavity, if $\mu = l^*$, we have $f(l^*) \geq E[f(l)]$, and $E[f(l)]$ reaches the theoretical maximum value $f(l^*)$. If $\mu \neq l^*$, combining Jensen's inequality, we obtain:
\[
f(l^*) > f(\mu), \quad \text{so} \quad E[f(l)] \leq f(\mu) < f(l^*).
\]
where $E[f(l)]$ cannot reach the optimal accuracy. 
\begin{lemma}\label{lemma:var}
For $\mu = l^*$, the expected accuracy $E[f(l)]$ is maximized when the variance $\sigma^2$ is minimized.
\end{lemma}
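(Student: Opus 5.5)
We prove \cref{lemma:var} through a second-order expansion of $f$ about the optimum $l^*$, so that $E[f(l)]$ separates into $f(l^*)$ and a correction governed by the spread of $l$ around $\mu=l^*$.

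\textbf{Step 1: expansion.} Assume $f$ is twice differentiable. Since $f$ is concave with an interior maximum at $l^*$, we have $f'(l^*)=0$. Taylor's theorem with Lagrange remainder gives, for each $l$,
\[
f(l) = f(l^*) + \tfrac{1}{2} f''(\xi_l)\,(l-l^*)^2
\]
for some $\xi_l$ between $l$ and $l^*$. Taking expectations under $p(l)$ with $\mu=l^*$ yields
\[
E[f(l)] = f(l^*) + \tfrac12 E\!\left[f''(\xi_l)(l-\mu)^2\right].
\]
Concavity gives $f''\le 0$, so the correction term is nonpositive.

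\textbf{Step 2: link the correction to $\sigma^2$.} In the local quadratic regime, approximate $f''(\xi_l)\approx f''(l^*) =: -c$ with $c\ge 0$. Then
\[
E[f(l)] \approx f(l^*) - \tfrac{c}{2}\sigma^2,
\]
which is decreasing in $\sigma^2$. It is therefore maximized when $\sigma^2$ is minimal, attaining $f(l^*)$ as $\sigma^2\to 0$, in agreement with the equality case in \cref{lemma:mu}.

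For an exact rather than approximate statement, assume the curvature of $f$ is uniformly bounded: $-M\le f''\le -m\le 0$. The expansion then gives the two-sided bound
\[
f(l^*) - \tfrac{M}{2}\sigma^2 \;\le\; E[f(l)] \;\le\; f(l^*) - \tfrac{m}{2}\sigma^2 .
\]
Both bounds are decreasing in $\sigma^2$, and both equal $f(l^*)$ at $\sigma^2=0$.

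\textbf{Main obstacle.} Without further structure, the statement is not strictly true. Two distributions with the same mean $l^*$ can be ordered one way by variance and the other way by $E[f(l)]$, because $E[f(l)]$ depends on more than the first two moments when $f$ is not quadratic. The proof therefore has to choose one of three precise readings.

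\emph{Reading (a): quadratic $f$.} If $f$ is quadratic near $l^*$, the identity $E[f(l)] = f(l^*) - \tfrac{c}{2}\sigma^2$ is exact, and the lemma holds as stated.

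\emph{Reading (b): location--scale family.} Restrict to $l=\mu+\sigma Z$ for a fixed standardized variable $Z$. Then $\sigma\mapsto E[f(\mu+\sigma Z)]$ is concave, and it is maximized at $\sigma=0$ because $E[Z]=0$ and Jensen's inequality applies. Concavity plus a maximum at $\sigma=0$ makes it nonincreasing for $\sigma\ge 0$. So within the family, smaller variance gives larger expected accuracy.

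\emph{Reading (c): bounded curvature.} Use the two-sided bounds from Step 2, which show that both the guaranteed accuracy and its ceiling improve as $\sigma^2$ decreases.

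I would state reading (a) as the main argument and give (b) as the rigorous monotonicity result. This matches the paper's use of the lemma: the length distribution should be concentrated around $l^*$.
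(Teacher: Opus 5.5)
Your proposal is correct, and it takes a different and more careful route than the paper. The paper's proof is a one-sentence heuristic: larger variance spreads $l$ out, lowers the chance of $l$ being near $l^*$, and so lowers accuracy.

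Your second-order expansion makes explicit that $E[f(l)]-f(l^*)=\tfrac12E[f''(\xi_l)(l-l^*)^2]$ is tied to $\sigma^2$ only under extra structure. Your diagnosis that the lemma fails as a general monotonicity claim is right. It also shows the paper's heuristic is unsound, because variance does not control the mass near $l^*$. For example, take $f(l)=-\sqrt{1+l^2}$ and $l^*=0$:
\begin{itemize}
\item the law $l=\pm1$ has variance $1$ and gives $E[f(l)]=-\sqrt2\approx-1.41$;
\item the law with $l=0$ with probability $0.99$ and $l=\pm20$ with probability $0.005$ each has variance $4$, yet puts $99\%$ of its mass exactly at $l^*$, and gives $E[f(l)]\approx-1.19$.
\end{itemize}
The paper's version buys intuition; yours buys a precise statement of when the lemma holds.

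Among your readings, (b) is the one that actually proves monotonicity, and it is stronger than you state. It uses only concavity of $f$ and $E[Z]=0$, and needs neither twice-differentiability nor $\mu=l^*$. It is an instance of the general correct form: $E[f(l_1)]\ge E[f(l_2)]$ for all concave $f$ exactly when $l_2$ is a mean-preserving spread of $l_1$ (convex order). That ordering implies $\mathrm{Var}(l_1)\le\mathrm{Var}(l_2)$, but is not implied by it.

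Two small fixes. In (a), the identity $E[f(l)]=f(l^*)-\tfrac c2\sigma^2$ is exact only if $f$ is quadratic on the whole support of $p$, not merely near $l^*$. In (c), you show that the bounds are monotone in $\sigma^2$, not that $E[f(l)]$ is, so it is a quantitative complement rather than a proof.

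Finally, the weak but fully general reading needs no expansion at all. Over all laws with mean $l^*$, the supremum $f(l^*)$ of $E[f(l)]$ is attained at $\sigma^2=0$, which is just Jensen's inequality as in Lemma~\ref{lemma:mu}.
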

\textit{Proof:} 
Larger variance widens the distribution of $l$, decreasing the likelihood of it being close to the optimal $l^*$, so minimizing variance improves accuracy.

\begin{table*}[t]
\caption{(Upper) Main results compared with other length optimization methods.The {\color[HTML]{C00000}\textbf{red}} and {\color[HTML]{4472C4}\textbf{blue}} represent the best and second-best result respectively. Our method can best balance the accuracy and the length. (Lower) Experiments on other models with different sizes. Our method can enhance the performance and shorten the length across varying models.}\label{tab:compare}
\vspace{-0.8em}
\renewcommand\arraystretch{1.15}
\fontsize{6}{7}\selectfont\setlength{\tabcolsep}{1.15mm}
\begin{center}
\begin{small}
\begin{sc}
\begin{tabular}{lcccccccccccc|cc}

\toprule
\multirow{2}{*}{Methods/Models}
 & \multicolumn{2}{c}{\textbf{OLYMPIAD}}                         & \multicolumn{2}{c}{\textbf{MATH}}                                    & \multicolumn{2}{c}{\textbf{AMC}}                                     & \multicolumn{2}{c}{\textbf{AIME2025}}                                & \multicolumn{2}{c}{\textbf{AIME2024}}                               & \multicolumn{2}{c}{\textbf{GPQA-D}}                                & \multicolumn{2}{c}{\textbf{AVG.}}   \\
 & acc. & len. & acc. & len.& acc. & len.& acc. & len.& acc. & len.& acc. & len.& acc. & len. \\
\midrule
\text{GRPO} 
 & $55.39$ & $5885$ & \color[HTML]{4472C4}$\textbf{91.86}$ & $3026$ & $89.84$ & $4080$ & $42.71$ & $9569$ & $56.04$ & $8927$ & $57.11$ & $5728$ &  $65.49$ & $6324$ \\
\text{L1} & \color[HTML]{C00000} $\textbf{57.67}$ & $6262$ & \color[HTML]{C00000} $\textbf{92.41}$ & $4269$ & \color[HTML]{4472C4} $\textbf{90.31}$ & $4957$ & $42.92$ & \color[HTML]{4472C4} $\textbf{9023}$ & \color[HTML]{4472C4} $\textbf{59.38}$ & $8090$& \color[HTML]{4472C4}$\textbf{57.26}$& $5543$ & \color[HTML]{4472C4} $\textbf{66.66}$ & $6357$ \\
\text{Shorter Better} 
 & $54.23$ & \color[HTML]{4472C4}$\textbf{4920}$ & $90.75$ & \color[HTML]{C00000}$\textbf{1920}$ & $88.44$ & \color[HTML]{C00000}$\textbf{3600}$ & $42.71$ & $9519$ & $58.96$ & \color[HTML]{C00000}$\textbf{7246}$ &$56.88$&\color[HTML]{C00000}$\textbf{4416}$& $65.33 $ & \color[HTML]{C00000}$\textbf{5270}$\\

\text{A-DLP} 
 & $54.03$ & \color[HTML]{C00000}$\textbf{4419}$ & $90.28$ & \color[HTML]{4472C4}$\textbf{2052}$ & $88.13$ &$3998$ &$43.13$ & $9248$ & $58.75$ & $8519$&$56.41$& \color[HTML]{4472C4}$\textbf{4986}$& $65.12 $ & \color[HTML]{4472C4}$\textbf{5536}$ \\
\text{GRPO-LEAD} 
 & $53.64 $ & $6173$ & $91.00$ & $3032 $ & $88.44 $ & $4823$ & \color[HTML]{4472C4}$\textbf{44.17 }$ & $9432 $ & $56.88 $ & $9010 $&$55.01 $& $7147 $& $64.86 $ & $6603 $ \\
\text{OURS} 
 & \color[HTML]{4472C4}$\textbf{55.70 }$ & $5299 $ & $91.84$ & $2321 $ & \color[HTML]{C00000}$\textbf{91.09 }$ & \color[HTML]{4472C4}$\textbf{3622}$ & \color[HTML]{C00000}$\textbf{47.08 }$ & \color[HTML]{C00000}$\textbf{8359 }$ & \color[HTML]{C00000}$\textbf{60.63 }$ & \color[HTML]{4472C4}$\textbf{8074 }$ & \color[HTML]{C00000}$\textbf{57.68 }$ & $5683 $ & \color[HTML]{C00000}$\textbf{67.34} $ & $5560 $ \\
 
 \midrule
 Qwen3-8b+grpo &$57.92$&$7372$&$\textbf{92.96}$&$4212$&$90.31 $&$6013$&$49.38$&$10281$&$63.96$&$9242$&$\textbf{56.76}$&$7379$&$68.55$&$7416$ \\
\rowcolor{gray!12}
Qwen3-8b+ours
 & $\textbf{58.95} $ &$\textbf{5501}$&$92.95$&$\textbf{2693}$&$\textbf{90.47}$&$\textbf{4445}$&$\textbf{50.83}$&$\textbf{8881}$&$\textbf{65.63}$&$\textbf{7941}$&$56.35$&$\textbf{6363}$&$\textbf{69.20}$&$\textbf{5970}$ \\
Qwen3-4b+grpo
 & $56.41$&$7347$ &$92.43$ &$3992$ &$88.13$ &$6031$ &$\textbf{49.38}$ &$10139$ &$61.87$ &$9288$ &$51.02$ &$7423$ &$66.54$ &$7370$  \\
 \rowcolor{gray!12}
Qwen3-4b+ours
&$\textbf{57.27}$ & $\textbf{5721}$&$\textbf{92.75}$ &$\textbf{2938}$ &$\textbf{91.56}$ &$\textbf{4074}$ &$48.96$ &$\textbf{8239}$ &$\textbf{65.83}$ &$\textbf{7972}$ &$\textbf{51.84}$ &$\textbf{6252}$ &$\textbf{68.03}$ &$\textbf{5866}$  \\
\bottomrule
\vspace{-2.5em}
\end{tabular}
\end{sc}
\end{small}
\end{center}
\end{table*}

Overall, according to~\cref{lemma:mu} and~\cref{lemma:var}, we have established our goals: i) making the expected value of the length distribution as close as possible to the optimal length. ii) ensuring that the length distribution is as concentrated as possible.

\textbf{Optimal Length as the Reference.} Building on the analysis in~\cref{sec:DDO}, we aim to align the length as closely as possible with the optimal length. For a given question $q$, we approximate the mean length of the correct answers for problems in the batch with difficulty $\text{diff}_q$ as the optimal length for $q$. Based on this, we modify the normalization in~\cref{eq:hard} and~\cref{eq:easy} as follows:
\begin{align} \label{eq:optimal}
    z_i = \frac{L_i - \mu_{\text{diff}_q}}{L_\text{max}},
\end{align}
\vspace{-2em}
\begin{align}
    \mu_{\text{diff}_q} = \frac{1}{C} \sum_{k=1}^{B}\sum_{i=1}^{G} \mathbb{I}(r_i^{correct} = 1 | \text{diff}_{q_k} = \text{diff}_q) \cdot l_i,
\end{align}
where $B$ is the batch size, $G$ is the group size, $l_i$ is the output length of $o_i$, and $C=\sum_{k=1}^{B}\sum_{i=1}^{G} \mathbb{I}(r_i^{correct} = 1| \text{diff}_{q_k} = \text{diff}_q)$. 
For simple tasks, the rollout length is penalized if it exceeds the optimal length and rewarded if it is shorter, leading to a reduction in overall length. For difficult tasks, the length is penalized if it is shorter than the optimal length and rewarded if it exceeds the optimal length, promoting longer outputs and encouraging exploration. At the same time, normalizing by the optimal length brings the lengths closer to it, making the length distribution more concentrated.
Notably, problems with $\text{diff}_q=0$ have no corresponding correct answers and represent the most difficult problems. Therefore, we aim to maximize exploration for these problems and define $z_i = L_i/L_{\text{max}}$. More detailed analysis is provided in Appendix~\ref{app:acc=0}.

Some recent works~\cite{yi2025shorterbetter} use the average length of the correct answers within the same query group as the optimal length. However, normalizing by the query-level average restricts comparisons to answers across different queries, limiting the informative cross-query exploration and ultimately leading to uniform rollout lengths within each query.
Additionally, using the length of correct answers across the entire batch as the optimal length is a straightforward and intuitive approach. However, for simple tasks, the average length of the batch is often longer than the rollout lengths, leading to an over-reward in~\cref{eq:easy} and causing excessively long responses. 
For difficult problems, the batch average length may be smaller than rollout lengths, which limits the exploration space and hinders optimal performance. 
We argue that the exploration space for answering questions depends on the problem's difficulty, and thus, we normalize the length using the difficulty-related average as the optimal length. Additionally, varying penalty and reward signals across queries with the same difficulty can improve learning efficiency. Additional comparison experimental results are provided in~\cref{tab:mu}.

\section{Experiments}
\subsection{Experiment Setup}
\textbf{Training Details.}
We perform RL training on the Qwen3-14B~\cite{yang2025qwen3} model, which is initially fine-tuned with LIMO-v2~\cite{ye2025limoreasoning} using SFT. For RL training, we employ the DeepScaleR-Preview-Dataset~\cite{deepscaler2025}, with a maximum sequence length of 12,800, a batch size of 128, a temperature of 1.0, a top-p value of 1, 10 rollouts, and a learning rate of $1e-6$.

\textbf{Evaluation Details.}
We conduct extensive experiments primarily on mathematical tasks, encompassing OlympiadBench~\cite{he2024olympiadbench}, MATH500~\cite{lightman2023let}, AMC\footnote{\url{https://huggingface.co/datasets/AI-MO/aimo-validation-amc}}, AIME2025\footnote{\url{https://huggingface.co/datasets/opencompass/AIME2025}}, AIME2024\footnote{\url{https://huggingface.co/datasets/HuggingFaceH4/aime\_2024}}, together with the general benchmark GPQA Diamond~\cite{rein2024gpqa}.
To further assess the out-of-domain generalization, we conduct additional experiments on MathQA~\cite{amini2019mathqa}, MMLU-Pro~\cite{wang2024mmlu}, BBEH~\cite{kazemi2025big}. As for the evaluation, we set the number of rollouts to 16, the top-p to 0.95, and the temperature to 0.6.

\textbf{Base Models.} In addition to Qwen3-14B, we validated the effectiveness of our method on Qwen3-8B and Qwen3-4B.

\textbf{Baselines.}
To validate the broad applicability of our method. We applied our method to GRPO~\cite{shao2024deepseekmath}, DAPO~\cite{yu2025dapo}, and DrGRPO~\cite{liu2025understanding}.

\textbf{Compared Methods.} We compare our approach with recent length optimization methods, including L1~\cite{aggarwal2025l1}, ShorterBetter~\cite{yi2025shorterbetter}, A-DLP~\cite{su2025thinking}, and GRPO-LEAD~\cite{zhang2025grpo}.
\subsection{Main Results}
\textbf{Comparing to Other Methods.} 
As shown in~\cref{tab:compare} (Upper), we compared the performance of our method ($\theta=0.4$) with recent approaches across multiple benchmarks, in terms of both accuracy and reasoning length. Our approach outperformed others, achieving the highest average accuracy, which improved by 1.85\% compared to L1, while also reducing the output length by 12\%.
Methods like ShorterBetter and A-DLP generate very short responses but fail to attain high accuracy. In contrast, our method strikes a balance, achieving higher accuracy and relatively shorter response lengths on average, and delivers the best performance on challenging benchmarks like AIME2024 and AIME2025. As for the easy benchmark like AMC, our method reaches the best accuracy and the second-best length performance.
Furthermore, our method outperforms existing approaches in accuracy on out-of-domain general datasets, such as GPQA Diamond, while also generating shorter outputs, highlighting the superior generalization ability of DDPO.

\textbf{Scaling to Models of Different Sizes.}
In addition to applying our method to the Qwen3-14B model after SFT, we also applied DDPO ($\theta=0.2$) to models with different sizes, namely Qwen3-8B and Qwen3-4B, which had not undergone additional SFT. Experimental results in~\cref{tab:compare} (Lower) show that, without any additional SFT, directly applying DDPO to Qwen3 models of different sizes improves model performance more significantly. In particular, applying DDPO to Qwen3-8B results in notable performance improvements in in-domain benchmarks, along with a clear reduction in output length. On the out-of-domain GQPA Diamond dataset, the length decreased significantly without sacrificing much performance. Overall, the average accuracy increased by 0.65\%, while the length decreased by 19.5\% on Qwen3-8B. A similar conclusion can be drawn for the Qwen3-4B models, which demonstrates that controlling length to enhance accuracy is a viable approach.

\begin{table}[t]
\caption{The effectiveness of our method across different baselines. The {\textcolor{gray}{gray}} background indicate the performance of our method ($\theta=0.2$) adding on varying baselines; \textbf{Bolded} values denote the better performance between baseline and our method.
}\label{tab:baseline}
\vspace{-0.8em}
\renewcommand\arraystretch{1.1}
\fontsize{7}{8}\selectfont\setlength{\tabcolsep}{7mm}
\begin{center}
\begin{small}
\begin{sc}
\begin{tabular}{lcc}

\toprule
\multirow{2}{*}{Baselines}
& \multicolumn{2}{c}{\textbf{AVG.}}   \\
 & acc. & len. \\
\midrule
DAPO &$65.87$&$\textbf{5236}$ \\

\rowcolor{gray!10}
DAPO+ours
 &$\textbf{66.61} $&$5490$ \\
DrGRPO
 & $65.22 $ & $6528$ \\
\rowcolor{gray!10}
 DrGRPO+ours
 & $\textbf{65.99} $ & $\textbf{5824}$ \\
 \text{GRPO} 
 &  $65.49$ & $6324$ \\
\rowcolor{gray!10}
 \text{GRPO+ours} 
 & $\textbf{66.79} $ & $\textbf{5494}$ \\
\bottomrule
\vspace{-10mm}
\end{tabular}
\end{sc}
\end{small}
\end{center}
\end{table}

\textbf{Applying to Other Baselines.}
To validate the broad applicability of our method, we applied our method not only to GRPO but also to its variants DAPO~\cite{yu2025dapo} and DrGRPO~\cite{liu2025understanding}, based on the fine-tuned Qwen3-14B model.~\cref{tab:baseline} presents the average accuracy and length on the six key benchmarks. The results demonstrate that our method improves the accuracy of various baselines while reducing the answer length in most cases. Specifically, it reduces the length of GRPO by 13\% and the length of DrGRPO by 11\%, which indicates the excellent scalability and broad applicability of DDPO. Full results and additional analysis are presented in Appendix~\ref{app:baselines}.

\subsection{Ablation Studies}
We systematically evaluated the importance of each mechanism in~\cref{tab:ablation}, based on the fine-tuned Qwen3-14B model, with $\theta=0.4$. The mechanisms tested include: 
i) Initial normalization ($L_i/L_\text{max}$) for length adjustment. 
ii) Difficulty-related coefficients ($\alpha$), which enabled more fine-grained length optimization. This not only improved model performance but also reduced output length.
iii) Normalization with optimal length $\mu_{\text{diff}_q}$ as the reference. By using the average length at each difficulty level as a reference for normalization, the model was able to better target the optimal output length, further improving accuracy while reducing unnecessary length.
iv) Lower Bar (LB), which constrained overly short answers. While slightly increasing the average length, it contributed to enhanced accuracy.
v) Upper Bar (UB), which mitigated the issue of excessively long outputs being truncated, thus maintaining both accuracy and length constraints. Full results are provided in  Appendix \ref{app:ablation}. 

\begin{table}[t]
\caption{Ablation Study. 
We validated the importance of each mechanism, showing that each improves accuracy and length control, with the optimal combination yielding significant performance gains and more concise output.
}\label{tab:ablation}
\vspace{-0.8em}
\renewcommand\arraystretch{1.25}
\fontsize{7}{8}\selectfont\setlength{\tabcolsep}{2.5mm}
\begin{center}
\begin{small}
\begin{sc}
\begin{tabular}{lcc}

\toprule
\multirow{2}{*}{Components}
& \multicolumn{2}{c}{\textbf{AVG.}}   \\
 & acc. & len. \\
\midrule
$L_i/L_\text{max}$ &$64.51$ &$5881$ \\
$\alpha*L_i/L_\text{max}$ &$65.02$ &$5692 $ \\
$\alpha*(L_i-\mu_{\text{diff}_q})/L_\text{max}$ &$65.47$ & $5968 $ \\
$\alpha*(L_i-\mu_{\text{diff}_q})/L_\text{max}+LB$ &$65.95$ &$6185 $\\
$\alpha*(L_i-\mu_{\text{diff}_q})/L_\text{max}+LB+UB$ &$\textbf{67.34}$ &$\textbf{5560}$ \\
\bottomrule
\vspace{-10mm}
\end{tabular}
\end{sc}
\end{small}
\end{center}
\end{table}

\begin{table*}[t]
\caption{Analysis of optimal length approximation ($\theta$=0.2). 
The results demonstrate that approximating the optimal length by difficulty-level averages can achieve optimal average performance. 
The \textbf{bold} represents the best result. 
}\label{tab:mu}
\vspace{-0.8em}
\renewcommand\arraystretch{1.15}
\fontsize{6}{7}\selectfont\setlength{\tabcolsep}{1.5mm}
\begin{center}
\begin{small}
\begin{sc}
\begin{tabular}{lcccccccccccc|cc}

\toprule
\multirow{2}{*}{AVG-TYPE}
 & \multicolumn{2}{c}{\textbf{OLYMPIAD}}                         & \multicolumn{2}{c}{\textbf{MATH}}                                    & \multicolumn{2}{c}{\textbf{AMC}}                                     & \multicolumn{2}{c}{\textbf{AIME2025}}                                & \multicolumn{2}{c}{\textbf{AIME2024}}                               & \multicolumn{2}{c}{\textbf{GPQA-D}}                                & \multicolumn{2}{c}{\textbf{AVG.}}   \\
  & acc.& len. & acc. & len.& acc. & len.& acc. & len.& acc. & len.& acc. & len.& acc. & len. \\
\midrule
$\mu_q$ &$55.46$&$\textbf{4954}$&$91.80 $&$2394 $&$89.38 $&$\textbf{3846 }$&$43.33 $&$\textbf{8104 }$&$57.71 $&$8315 $&$56.79 $&$5931 $&$65.74 $&$5591 $ \\

$\mu_\text{batch}$
 & $\textbf{55.60} $ &$6273 $&$91.55 $&$4110 $&$89.06 $&$5113 $&$\textbf{44.79 }$&$9486 $&$59.79 $&$8549 $&$57.39 $&$6462 $&$66.36 $&$6665 $ \\

$\mu_{\text{diff}_q}$
 & $55.42$ & $5027$ & $\textbf{92.05}$ & $\textbf{2165}$ & $\textbf{89.84}$ & $4056$ & $44.17$ & $9204$ & $\textbf{61.46}$ & $\textbf{7119}$ & $\textbf{57.80}$ & $5390$ & $\textbf{66.79} $ & $\textbf{5494}$ \\
\bottomrule
\end{tabular}
\end{sc}
\end{small}
\end{center}
\end{table*}

\subsection{Additional Analysis}
\textbf{Analysis of the Estimation of the Optimal Length.}
In~\cref{eq:optimal}, we approximate the optimal length as the average length of correct answers for queries within the batch that share the same difficulty as the current query, denoted as $\mu_{\text{diff}_q}$, which serves as the reference for normalization. To validate the rationale of this approximation, we compared $\mu_{\text{diff}_q}$ with other variants: i) the average length of correct answers in the current query (called $\mu_q$); ii) the average length of correct answers across the entire batch (called $\mu_\text{batch}$).
As shown in~\cref{tab:mu}, the query-level average $\mu_q$ performs worse on challenging benchmarks such as AIME2024 and AIME2025 compared to the difficulty-level average $\mu_{\text{diff}q}$, suggesting its insufficient exploration for harder problems. Additionally, the length of the $\mu_\text{batch}$ method on simpler benchmarks like Math and AMC is significantly larger than both $\mu_q$ and $\mu_{\text{diff}q}$, as excessive length rewards lead to redundant answers. 
In contrast, normalizing by difficulty-level average $\mu_{\text{diff}q}$ results in competitive performance in both accuracy and length. As analyzed in~\cref{sec:optimal_length}, the exploration space for answering questions varies with the difficulty of
the problem, and the experimental results validate the rationale for using $\mu_{\text{diff}_q}$ as the optimal reference, ensuring a more tailored and efficient optimization.

\textbf{The Shift of Length Distribution.}
To validate the effectiveness of our theory and approach presented in \cref{sec:optimal_length}, we visualized the distribution of answer lengths for both our DDPO and the baseline GRPO, before and after training, across easy and hard tasks in~\cref{fig:dist}. 
Additionally, we illustrate the accuracy corresponding to these lengths (the line plots in~\cref{fig:dist}), where the optimal length is defined as the length of the highest accuracy. In this case, we use AMC and AIME2025 to represent the easy and hard tasks in~\cref{fig:dist} (a)(b) and~\cref{fig:dist} (c)(d), respectively.

For the easy task shown in \cref{fig:dist} (a)(b), the optimal length is shorter, whereas for the hard task in \cref{fig:dist} (c)(d), the optimal length is typically longer. Compared to GRPO, our method shifts the length distribution leftward for easy tasks, making it more concentrated and closer to the optimal length. For hard tasks, it shifts the length distribution rightward, concentrating it around the optimal length. These distributional shifts for both task types result in significant accuracy improvements over GRPO, with accuracy increasing by 11.5\% for easy tasks and by 11.5\% to 13.5\% for hard tasks. The results confirm that our method effectively satisfies \cref{lemma:mu} and \cref{lemma:var}, aligning the expected length with the optimal length and producing a more compact distribution, thereby enhancing model accuracy.

\begin{figure}[t]
\begin{center}
\centerline{\includegraphics[width=0.5\textwidth]{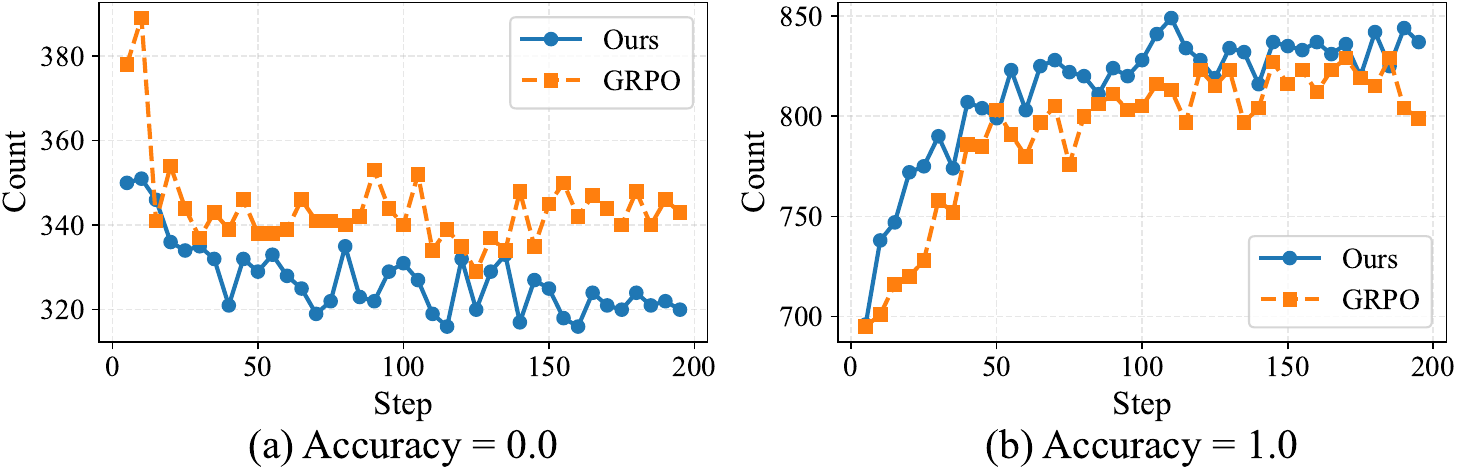}}
\caption{The change in the number of samples with an accuracy of 0 and 1 during the training process of GRPO and DDPO. Throughout DDPO training, the number of samples with an accuracy of 0 is consistently lower than that in GRPO, while the number of samples with an accuracy of 1 consistently exceeds that in GRPO.} \label{fig:acc}
\vspace{-2em}
\end{center}
\end{figure}

\textbf{Effective Utilization of Extreme Samples}.
To demonstrate that our method effectively addresses extreme samples (with accuracy of 0 and 1) and thereby increases the model accuracy, we visualized the trends of such samples across the entire validation set during training. As shown in \cref{fig:acc}, the number of samples with accuracy of 0 steadily decreases compared to GRPO, while the number of samples with an accuracy of 1 remains consistently higher. This indicates that by focusing on challenging samples, our method has expanded the model’s exploration space, leading to improved accuracy. Moreover, the reduction in length for simpler samples does not degrade the quality of the model’s responses. It confirms that our method effectively leverages extreme samples to enhance the model's performance.

\textbf{A More Concentrated Length Distribution}. In~\cref{sec:optimal_length}, we derive the conditions for maximizing expected accuracy, namely that the length distribution should be close to the optimal length and highly concentrated. Guided by these conditions, we implement a normalization strategy using the difficulty-level average length as the reference in~\cref{eq:optimal}. To validate the effectiveness of the approach, we visualize the box plots of output lengths for problems of varying accuracy in~\cref{fig:box}. Compared to GRPO, our method produces a more concentrated length distribution at each difficulty level. Moreover, shifts in the median across difficulties indicate that DDPO effectively mitigates the \textit{overconfidence} phenomenon, yielding a negative correlation between output length and accuracy.

\textbf{More Analysis}. We further analyzed DDPO from the following aspects: i) To validate the effectiveness of applying $z_i = L_i / L_\text{max}$ when $\text{diff}_q=0$ in~\cref{sec:optimal_length}, 
we compare this with other normalization variants and provide an detailed analysis in Appendix~\ref{app:acc=0}. ii) We conduct additional evaluations on general benchmarks to further assess the generalization ability of DDPO, as reported in Appendix~\ref{app:out-of-domain}. iii) We perform a hyperparameter analysis of the boundary $\theta$ between simple and difficult problems in Appendix~\ref{app:alpha}.



\begin{figure}[t]
\begin{center}
\centerline{\includegraphics[width=0.5\textwidth]{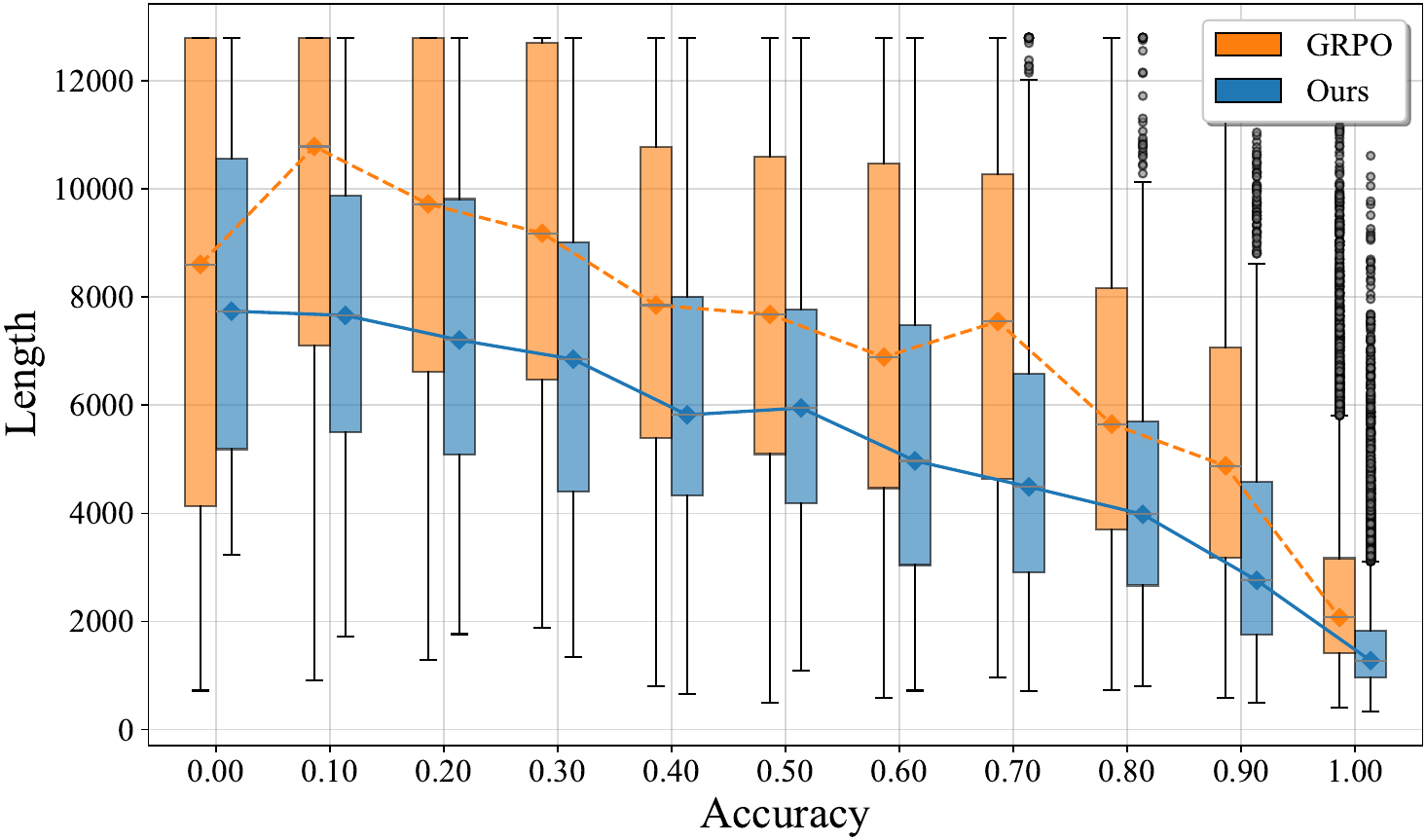}}
\caption{A comparison of the length distributions for different difficulty levels between GRPO and DDPO. The result shows that DDPO results in a more concentrated length distribution and significantly alleviates the \textit{overconfidence} phenomenon.} \label{fig:box}
\vspace{-3em}
\end{center}
\end{figure}

\section{Conclusion}
LRMs often suffer from overthinking, leading to unnecessary computation and potential performance degradation. By analyzing the relationship between output length and accuracy, we identify an \textit{overconfidence} phenomenon: models tend to provide short but incorrect answers for difficult problems, while generating overly long and redundant responses for simpler ones. To address this, we propose Difficulty-Differentiated Policy Optimization (DDPO), which optimizes problems of varying difficulty from distinct perspectives to balance accuracy and length. We provide a theoretical analysis demonstrating that aligning the output length distribution with an optimal target can enhance expected accuracy. Additionally, we introduce a normalization scheme to guide outputs toward this target and approximate the optimal length by the average length of correct answers at the same difficulty level. Extensive experiments across both in-domain and out-of-domain benchmarks demonstrate that our method consistently achieves an improved trade-off between accuracy and output length.









\section*{Impact Statement}
This work introduces a novel method for optimizing reasoning length in large language models (LLMs), addressing the overthinking and overconfidence issues that hinder model performance. By deriving theoretical conditions for expected accuracy and proposing a difficulty-differentiated optimization strategy, our method not only improves model efficiency but also enhances accuracy on both in-domain and out-of-domain benchmarks. This contribution has the potential to advance the design of more efficient, interpretable LLMs, opening new avenues for future research in natural language processing and reinforcement learning.




\nocite{langley00}

\bibliography{reference.bib}
\bibliographystyle{icml2026}

\newpage
\appendix
\onecolumn
\section*{Appendix}

\section{Analysis of the Optimal Length when $\text{diff}_q=0$} \label{app:acc=0}
For extremely difficult problems with an accuracy of 0, there is no optimal reference length to reference, so we aim to encourage the model to generate longer responses to maximize exploration. In~\cref{sec:optimal_length}, when $\text{diff}_q=0$, we design the length normalization function as $z_i=L_i/L_\text{max}$ to encourage exploration as much as possible. 
To validate the reasonability and effectiveness of the setting, we compared the other variation of the normalization function: i) $\mu_\text{similar}$, the average length of the correct answer for questions with the most similar difficulty. ii) $\mu_\text{batch}$, the average length of the correct answer across all questions in the batch. iii) $\mu_q$, the average question length, with no right answer. iv) ZERO, which normalizes the length by $z_i=L_i/L_\text{max}$. As shown in~\cref{tab:acc=0}, when $acc=0$, normalizing by $z_i=L_i/L_\text{max}$ achieves the best performance. Any normalization method that subtracts a mean length restricts the exploration space for problems with $\text{diff}_q=0$, limiting the model's accuracy.
In particular, this leads to a reduction in the overall response length, demonstrating that our method can adaptively converge to a globally shorter response length.

\begin{table*}[h]
\caption{Analysis of optimal length approximation when $acc=0$ ($\theta=0.2$). When $acc=0$, the exploration space should be maximized. The results show that normalizing by $z_i = L_i / L_\text{max}$ yields the best performance, as indicated by the \textbf{bold} results.
}\label{tab:acc=0}
\vspace{-0.8em}
\renewcommand\arraystretch{1.15}
\fontsize{6}{7}\selectfont\setlength{\tabcolsep}{1.5mm}
\begin{center}
\begin{small}
\begin{sc}
\begin{tabular}{lcccccccccccc|cc}

\toprule
\multirow{2}{*}{AVG-TYPE}
 & \multicolumn{2}{c}{\textbf{OLYMPIAD}}                         & \multicolumn{2}{c}{\textbf{MATH}}                                    & \multicolumn{2}{c}{\textbf{AMC}}                                     & \multicolumn{2}{c}{\textbf{AIME2025}}                                & \multicolumn{2}{c}{\textbf{AIME2024}}                               & \multicolumn{2}{c}{\textbf{GPQA-D}}                                & \multicolumn{2}{c}{\textbf{AVG.}}   \\
 & acc.& len. & acc. & len.& acc. & len.& acc. & len.& acc. & len.& acc. & len.& acc. & len. \\
\midrule
$\mu_
\text{similar}$ &$54.81$&$5308 $&$91.10 $&$2590  $&$\textbf{89.84}$&$4182 $&$43.75$&$8957$&$60.21$&$8104 $&$57.87 $&$5522  $&$66.26$&$5777  $ \\

$\mu_\text{batch}$
 & $54.96$ &$5220 $&$91.45$&$2352  $&$89.22$&$4067 $&$42.92$&$9375  $&$59.38 $&$7900  $&$57.46$&$5472  $&$65.90$&$5731  $ \\

$\mu_q$
 & $54.45$ & $5361 $ & $90.76$ & $2596 $ & $89.22$ & $4065 $ & $43.96$ & $\textbf{8382} $ & $57.50$ & $7797 $ & $57.55$ & $5620 $ & $65.57$ & $5637$ \\

zero
 & $\textbf{55.42}$ & $\textbf{5028}$ & $\textbf{92.05}$ & $\textbf{2165}$ & $\textbf{89.84}$ & $\textbf{4056}$ & $\textbf{44.17}$ & $9204$ & $\textbf{61.46}$ & $\textbf{7119}$ & $\textbf{57.80}$ & $\textbf{5390}$ & $\textbf{66.79} $ & $\textbf{5494}$ \\
\bottomrule
\end{tabular}
\end{sc}
\end{small}
\end{center}
\end{table*}

\section{Performance on Other Out-of-Domain Benchmarks.} \label{app:out-of-domain}
To further validate the performance of DDPO ($\theta=0.2$) on out-of-domain data, we conducted additional evaluations on three general benchmarks: MathQA, MMLU-Pro, and BBEH. These benchmarks assess the robustness of our method in diverse real-world scenarios. The results in~\cref{tab:out-of-domain} show that, compared to GRPO, our method achieves significant improvements in length reduction across all three datasets, while maintaining or even enhancing the accuracy. This demonstrates the superior generalization ability of our approach, indicating its strong potential for handling a wide range of tasks beyond the training domain. Specifically, the average length of DDPO is reduced by 28.81\% compared to GRPO, significantly making the output more concise.

\begin{table}[h]
\caption{Comparison with GRPO on out-of-domain benchmarks based on Qwen3-14b, results demonstrate that DDPO outperforms GRPO both in terms of average accuracy and output length.
}\label{tab:out-of-domain}
\vspace{-0.8em}
\renewcommand\arraystretch{1.1}
\fontsize{7}{8}\selectfont\setlength{\tabcolsep}{2.9mm}
\begin{center}
\begin{small}
\begin{sc}
\begin{tabular}{lcc|cc}

\toprule
\multirow{2}{*}{Benchmarks}
& \multicolumn{2}{c}{\textbf{ACC.}} & \multicolumn{2}{c}{\textbf{LEN.}}   \\
 & grpo & ddpo & grpo & ddpo \\
\midrule
math\_qa &$86.15$ & $\textbf{86.46}$& $2555$ & $\textbf{2313}$\\

bbeh
 &$21.03$ &$\textbf{21.20}$&$6851$ & $\textbf{6185}$ \\

mmlu\_pro
 & $\textbf{17.58}$ &$17.41$ & $1817$&$\textbf{1548}$ \\
avg.
 &$41.59$ & $\textbf{41.69}$ &$4703$ & $\textbf{3348}$ \\
\bottomrule
\vspace{-8mm}
\end{tabular}
\end{sc}
\end{small}
\end{center}
\end{table}

\section{Effect of the Threshold $\theta$} \label{app:alpha}
We performed a hyperparameter analysis on $\theta$, the boundary between simple and difficult problems. As shown in~\cref{tab:alpha}, an increase in $\theta$ results in a growing number of problems being classified as difficult, which in turn leads to higher length penalties for the model. Consequently, the average response length also increases, but the accuracy does not improve accordingly. By referring to~\cref{fig:line}, we observe that the phenomenon of \textit{overconfidence} appears to be demarcated by the range of 0.2 to 0.4. Therefore, excessively large values of $\theta$ cause many simple problems to be misclassified as difficult, and excessively long responses result in a decline in accuracy. When $\theta$ is adjusted from 0.2 to 0.4, we observe a significant improvement in accuracy for AIME2025, from $44.17\%$ to $47.08\%$, indicating that more challenging problems in AIME2025 require a larger search space to be effectively explored. When $\theta$ is 0.2, the accuracy of MATH is highest, indicating that there is significant room for length reduction in the answers for this benchmark. These findings underscore the importance of carefully tuning $\theta$ to balance problem classification and length optimization for improved overall performance.

\begin{table*}[h]
\caption{Analysis of $\theta$. The performance of $\theta=0.2$ and $\theta=0.4$ demonstrates superior results, corresponding to the boundary where the \textit{overconfidence} phenomenon is observed in~\cref{fig:line}
The \textbf{bold} represents the best result. 
}\label{tab:alpha}
\vspace{-0.8em}
\renewcommand\arraystretch{1.15}
\fontsize{6}{7}\selectfont\setlength{\tabcolsep}{1.5mm}
\begin{center}
\begin{small}
\begin{sc}
\begin{tabular}{lcccccccccccc|cc}

\toprule
\multirow{2}{*}{Parameters}
 & \multicolumn{2}{c}{\textbf{OLYMPIAD}}                         & \multicolumn{2}{c}{\textbf{MATH}}                                    & \multicolumn{2}{c}{\textbf{AMC}}                                     & \multicolumn{2}{c}{\textbf{AIME2025}}                                & \multicolumn{2}{c}{\textbf{AIME2024}}                               & \multicolumn{2}{c}{\textbf{GPQA-D}}                                & \multicolumn{2}{c}{\textbf{AVG.}}   \\
  & acc.& len. & acc. & len.& acc. & len.& acc. & len.& acc. & len.& acc. & len.& acc. & len. \\
\midrule
$\theta=0.2$ &$55.42$ & $\textbf{5028}$ & $\textbf{92.05}$ & $\textbf{2165}$ &$89.84$&$4057$&$44.17$&$9205 $&$\textbf{61.46}$&$\textbf{7120}  $&$\textbf{57.80}$&$\textbf{5390}   $&$66.79$&$\textbf{5494}  $ \\

$\theta=0.4$
 & $\textbf{55.70}$ &$5299  $&$91.84$&$2321   $&$\textbf{91.09}$&$\textbf{3622}  $&$\textbf{47.08}$&$\textbf{8359} $&$60.63 $&$8074 $&$57.68$&$5683   $&$\textbf{67.34}$&$5560$ \\

$\theta=0.6$
 & $54.51$ & $5551 $ & $90.83$ & $2674  $ & $88.44$ & $4193  $ & $44.38$ & $8839 $ & $59.38$ & $8110  $ & $56.44$ & $6157 $ & $65.66$ & $5921 $ \\

$\theta=0.8$
 & $54.93$ & $5507$ & $91.56$ & $2570 $ & $90.00$ & $4427$ & $44.58$ & $9257$ & $60.21 $ & $7990 $ & $56.79$ & $5919$ & $66.34$ & $5945 $ \\
\bottomrule
\end{tabular}
\end{sc}
\end{small}
\end{center}
\end{table*}

\section{Ablation Study} \label{app:ablation}
We systematically evaluated the importance of each mechanism in~\cref{tab:ablation} based on the fine-tuned Qwen3-14B model, with $\theta=0.4$.~\cref{tab:ablation_full} presents the complete results of the ablation study, where each component contributes to a significant improvement in model accuracy, highlighting the effectiveness of each individual component. 

\begin{table*}[h]
\caption{Ablation study. Each line validates the effectiveness of a specific component, with the final line representing the complete DDPO method. 
The {\color[HTML]{C00000}\textbf{red}} and {\color[HTML]{4472C4}\textbf{blue}} represent the best and second-best result respectively. 
}\label{tab:ablation_full}
\vspace{-0.8em}
\renewcommand\arraystretch{1.15}
\fontsize{6}{7}\selectfont\setlength{\tabcolsep}{1.5mm}
\begin{center}
\begin{small}
\begin{sc}
\begin{tabular}{lcccccccccccc|cc}

\toprule
\multirow{2}{*}{Components}
 & \multicolumn{2}{c}{\textbf{OLYMPIAD}}                         & \multicolumn{2}{c}{\textbf{MATH}}                                    & \multicolumn{2}{c}{\textbf{AMC}}                                     & \multicolumn{2}{c}{\textbf{AIME2025}}                                & \multicolumn{2}{c}{\textbf{AIME2024}}                               & \multicolumn{2}{c}{\textbf{GPQA-D}}                                & \multicolumn{2}{c}{\textbf{AVG.}}   \\
 & acc.& len. & acc. & len.& acc. & len.& acc. & len.& acc. & len.& acc. & len.& acc. & len. \\
\midrule
$L_i/L_\text{max}$ &$53.80$ & \color[HTML]{C00000}$\textbf{4955}$ & $90.65$ & \color[HTML]{C00000}$\textbf{1979}$ & $87.81$ & $4859 $ & $42.29$ & $9248$ & $56.67$ & $8078$ & $55.84$ & $6168$ & $64.51$ & $5881$ \\
$\alpha$
 & $54.00$ & $5450$ & $90.28$ & \color[HTML]{4472C4}$\textbf{2165}$ & $88.13$ & \color[HTML]{4472C4}$\textbf{3822}$ & $43.33$ & $9018$ & $57.71$ & \color[HTML]{C00000}$\textbf{8062}$ & \color[HTML]{4472C4}$\textbf{56.69}$ & \color[HTML]{C00000}$\textbf{5636}$ & $65.02$ & \color[HTML]{4472C4}$\textbf{5692}$ \\

$\mu_{\text{diff}_q}$
 & $54.57$ & $5649$ & $90.95$ & $2512$ & $89.06$ & $4148$ & $44.17$ & \color[HTML]{4472C4}$\textbf{8880}$ & $57.92$ & $8209$ & $56.12$ & $6410$ & $65.47$ & $5968$ \\

LB
 & \color[HTML]{4472C4}$\textbf{54.81}$ & $6068$ & \color[HTML]{4472C4}$\textbf{91.38}$ & $2672$ & \color[HTML]{4472C4}$\textbf{89.91}$ & $4359$ & \color[HTML]{4472C4}$\textbf{44.58}$ & $9042$ & \color[HTML]{4472C4}$\textbf{58.67}$ & $8268$ & $56.34$ & $6701$ & \color[HTML]{4472C4}$\textbf{65.95}$ & $6185$ \\

UB
& \color[HTML]{C00000}$\textbf{55.70}$ & \color[HTML]{4472C4}$\textbf{5299}$ & \color[HTML]{C00000}$\textbf{91.84}$ & $2321$ & \color[HTML]{C00000}$\textbf{91.09}$ & \color[HTML]{C00000}$\textbf{3622}$ & \color[HTML]{C00000}$\textbf{47.08}$ & \color[HTML]{C00000}$\textbf{8359}$ & \color[HTML]{C00000}$\textbf{60.63}$ & \color[HTML]{4472C4}$\textbf{8074}$ & \color[HTML]{C00000}$\textbf{57.68}$ & \color[HTML]{4472C4}$\textbf{5683}$ & \color[HTML]{C00000}$\textbf{67.34}$ & \color[HTML]{C00000}$\textbf{5560}$ \\
\bottomrule
\end{tabular}
\end{sc}
\end{small}
\end{center}
\end{table*}

\section{Performance on other Baselines} \label{app:baselines}
We provided the full results on six benchmarks in~\cref{tab:baseline_full}. For GRPO as the baseline, our method improves accuracy across all benchmarks while reducing response length. For DrGRPO, our method enhances accuracy on all benchmarks and reduces length on most datasets. For DAPO, since it already incorporates penalties for long answers, there may be some conflict with the DDPO mechanisms, leading to an increase in response length. However, the average accuracy still improves. Overall, these results demonstrate that our method effectively balances accuracy and length across different baselines and datasets.

\begin{table*}[h]
\caption{The effectiveness of our method ($\theta=0.2$) across different baselines. The {\textcolor{gray}{gray}} background indicates the performance of our method, adding on varying baselines; \textbf{Bolded} values denote the better performance between baseline and our method.
}\label{tab:baseline_full}
\vspace{-0.8em}
\renewcommand\arraystretch{1.2}
\fontsize{6}{7}\selectfont\setlength{\tabcolsep}{1.3mm}
\begin{center}
\begin{small}
\begin{sc}
\begin{tabular}{lcccccccccccccc}

\toprule
\multirow{2}{*}{Baselines}
 & \multicolumn{2}{c}{\textbf{OLYMPIAD}}                         & \multicolumn{2}{c}{\textbf{MATH}}                                    & \multicolumn{2}{c}{\textbf{AMC}}                                     & \multicolumn{2}{c}{\textbf{AIME2025}}                                & \multicolumn{2}{c}{\textbf{AIME2024}}                               & \multicolumn{2}{c}{\textbf{GPQA-D}}                                & \multicolumn{2}{c}{\textbf{AVG.}}   \\
 & acc. & len. & acc. & len.& acc. & len.& acc. & len.& acc. & len.& acc. & len.& acc. & len. \\
\midrule
DAPO &$55.44$&$5149$&$91.61 $&$\textbf{2190} $&$89.53 $&$\textbf{3781}$&$43.54 $&$\textbf{7805}$&$\textbf{61.46}$&$\textbf{6402}$&$57.61 $&$\textbf{4194}$&$66.53$&$\textbf{4920}$ \\

\rowcolor{gray!12}
DAPO+ours
 & $\textbf{56.13} $ &$\textbf{5145}$&$\textbf{92.05} $&$2466$&$\textbf{89.84} $&$3899$&$\textbf{45.83 }$&$8811$&$59.38 $&$7777$&$\textbf{58.11} $&$5280 $&$\textbf{66.89} $&$5563$ \\
 
DrGRPO
 & $54.31$ & $6171$ & $91.50$ & $3117$ & $89.22$ & $5089$ & $42.29$ & $9546$ & $\textbf{58.33}$ & $8487$ & $55.68$ & $6760$ & $65.22 $ & $6528$ \\
 
\rowcolor{gray!12}
 DrGRPO+ours
 & $\textbf{55.00}$ & $\textbf{5334}$ & $\textbf{91.60}$ & $\textbf{2518}$ & $\textbf{89.38}$ & $\textbf{4113}$ & $\textbf{45.00}$ & $\textbf{9145}$ & $58.13$ & $\textbf{7923}$ & $\textbf{56.85}$ & $\textbf{5913}$ & $\textbf{65.99} $ & $\textbf{5824}$ \\

 \text{GRPO} 
 & $55.39$ & $5885$ & $91.86$ & $3026$ & $\textbf{89.84}$ & $4080$ & $42.71$ & $9569$ & $56.04$ & $8927$ & $57.11$ & $5728$ &  $65.49$ & $6324$ \\
 
\rowcolor{gray!12}
 \text{GRPO+Ours} 
 & $\textbf{55.42}$ & $\textbf{5027}$ & $\textbf{92.05}$ & $\textbf{2165}$ & $\textbf{89.84}$ & $\textbf{4056}$ & $\textbf{44.17}$ & $\textbf{9204}$ & $\textbf{61.46}$ & $\textbf{7119}$ & $\textbf{57.80}$ & $\textbf{5390}$ & $\textbf{66.79} $ & $\textbf{5494}$ \\
\bottomrule
\end{tabular}
\end{sc}
\end{small}
\end{center}
\end{table*}


\end{document}